\documentclass[11pt]{article}

\usepackage[final]{acl}
\usepackage{amsmath}
\usepackage{amsfonts}       % blackboard math symbols
\usepackage{times}
\usepackage{latexsym}
\usepackage{natbib}
\usepackage{booktabs}
\usepackage{multirow}
\usepackage[T1]{fontenc}

\usepackage{amsthm}

\newtheorem{proposition}{Proposition}

\usepackage{subcaption}
\usepackage[utf8]{inputenc}

\usepackage{microtype}

\usepackage{inconsolata}

\usepackage{graphicx}

\title{Latent-LoRA: Compact Latent-Space Adapters with Gradient-Free Routing for Continual Learning}

\author{
  \textbf{Reza Rahimi Azghan\textsuperscript{1}},
  \textbf{Gautham Krishna Gudur\textsuperscript{2}},
  \textbf{Giulia Pedrielli\textsuperscript{3}},
\\
  \textbf{Pavan Turaga\textsuperscript{4}},
  \textbf{Hassan Ghasemzadeh\textsuperscript{1}}
\\
\\
  \textsuperscript{1}College of Health Solutions,
  Arizona State University, Phoenix, Arizona, USA
\\
  \textsuperscript{2}Department of Electrical and Computer Engineering,
  The University of Texas at Austin, Austin, Texas, USA
\\
  \textsuperscript{3}School of Computing and Augmented Intelligence,
  Arizona State University, Tempe, Arizona, USA
\\
  \textsuperscript{4}The GAME School,
  Arizona State University, Tempe, Arizona, USA
\\
  \small{
    \textbf{Correspondence:}
    \href{mailto:rrahimia@asu.edu}{rrahimia@asu.edu}
  }
}
\begin{document}
\maketitle
\begin{abstract}
    
Large language models generalize well to individual tasks but lack an inherent mechanism for learning them sequentially, leading to catastrophic forgetting. To mitigate this, LoRA-based continual learning methods allocate a separate low-rank adapter per task, yet existing approaches either require task identity at inference or sum all adapters indiscriminately, letting irrelevant branches distort the output. Recent gating-based solutions route inputs to the correct adapter but introduce trainable parameters that themselves need protection against forgetting. In this work, we observe that pooled token embeddings from a frozen LLM embedding layer already separate task distributions throughout the learning sequence. A Gaussian mixture model fitted on these embeddings, without any gradient-based training, is sufficient for task-agnostic adapter selection at test time. This eliminates the need for a learned gating module. On the adapter side, constraining each task's parameters to the principal subspace of the pretrained weights via SVD yields a compact latent-space parameterization. Within this subspace, orthogonal regularization directly controls inter-task interference. The resulting system, Latent-LoRA, is replay-free, requires no trainable routing component, and uses substantially fewer parameters per task. Experiments across five model scales and two established continual learning benchmarks show state-of-the-art performance with near-zero forgetting.\looseness=-1

\end{abstract}

\section{Introduction}

Continual learning (CL), which requires a model to learn a sequence of tasks without forgetting previously acquired knowledge, is a fundamental challenge for large language models (LLMs). While LLMs achieve strong performance across a wide range of tasks through pre-training and fine-tuning, they tend to overwrite earlier knowledge when trained on new tasks, a well-known phenomenon called catastrophic forgetting~\citep{mccloskey1989catastrophic, kemker2018measuring, french1999catastrophic}. This problem is particularly pronounced in LLMs due to their massive parameter counts, which give them high capacity for new tasks but little capacity for preserving old ones~\citep{shi2024continual, luo2023investigating, de2021continual, rebuffi2017icarl}.\looseness=-1

Low-rank adaptation (LoRA)~\citep{hu2021lora, dettmers2024qlora} has become a popular foundation for CL in LLMs. By reparameterizing weight updates as low-rank matrices, LoRA enables parameter-efficient fine-tuning that fits naturally into the CL setting~\citep{wang2024comprehensive}: each new task receives its own LoRA branch while previous branches remain frozen, preventing direct interference with earlier knowledge. Methods such as O-LoRA~\citep{wang2023olora} further constrain new branches to lie in subspaces orthogonal to previous ones to reduce cross-task overlap. However, a critical problem persists at inference. Since task identities are unavailable in the task-agnostic setting, O-LoRA integrates all branches by simple summation, forcing every branch to influence the output on every input. This indiscriminate aggregation allows branches trained for one task to distort predictions on another and reintroduces forgetting despite the frozen parameters and orthogonal training.\looseness=-1

Recent work has attempted to address this by inserting a learned gating module between the input and each branch. GainLoRA~\citep{gainlora}, for instance, introduces a per-task MLP gate trained to activate the corresponding branch while suppressing others. While effective, this approach has its own drawbacks. The number of gating parameters grows linearly with the number of tasks, adding significant overhead. More fundamentally, the gates themselves are susceptible to forgetting, since each new gate's training can interfere with the subspaces used by previous gates. To prevent this, GainLoRA applies Gradient Projection Memory~\citep{saha2021gradient} to constrain gate updates orthogonally. It essentially introduces a second continual learning mechanism to protect the first.\looseness=-1

In this work, we take a different approach. We observe that pooled token embeddings produced by an LLM's own frozen embedding layer already separate task distributions cleanly enough to be routed by a simple probabilistic model. Concretely, after each task, we fit a Gaussian mixture over that task's training-data embeddings; at inference, the resulting mixtures yield a posterior distribution over tasks for any input, which is used to weight the contribution of each task's adapter. The router has stored parameters (means and covariance) but no trainable ones, and cannot be corrupted by subsequent training. We pair this router with compact adapters based on LoRA-XS~\citep{benedek2024lora-xs}, which constrain each task's weight update to the principal subspace of the pretrained weights via SVD, reducing per-task storage to a small $r \times r$ matrix, where $r$ is the adapter rank, building on ideas explored in SVFT ~\citep{svft}. Each adapter is saved as an independent snapshot that is never modified after training, so no subsequent task can corrupt a previously learned adapter.\looseness=-1

The scientific contributions in this work are:

\begin{itemize}
    \item We propose a training-free Gaussian mixture router that operates on pooled token embeddings from the base LLM's frozen embedding layer. The router has no trainable parameters, requires no protection against forgetting, and adds negligible storage per task.\looseness=-1
    \item To the best of our knowledge, this is the first application of low-rank latent-space adapters to continual learning. By training only a small square matrix in the SVD-induced latent subspace of each pretrained weight, we substantially reduce per-task storage compared to the standard LoRA branches.\looseness=-1
    \item Our method, LatentLoRA, achieves state-of-the-art average performance on standard CL benchmarks in the task-agnostic, exemplar-free setting, without any trainable routing component.\looseness=-1
\end{itemize}

\section{Background}
\subsection{Continual Learning Setup}
Continual learning considers a setting where a model is trained on a sequence of tasks $\{\mathcal{D}_t\}_{t=1}^{T}$, arriving one at a time~\citep{parisi2019continual}. In the supervised setting, each task consists of input-label pairs $\mathcal{D}_t = \{(\boldsymbol{x}_{i}, y_{i})\}_{i=1}^{n_t}$ drawn from a task-specific distribution $P_t(\boldsymbol{x}, y)$, where $n_t$ is the number of examples in task $t$. The goal is to find parameters $\boldsymbol{\Theta}$ for a single model that performs well across all tasks seen so far:\looseness=-1
\begin{equation}
    \boldsymbol{\Theta}^{*}=\arg\max_{\boldsymbol{\Theta}} \sum_{t=1}^{T} \sum_{(\boldsymbol{x}, y) \in \mathcal{D}_t} \log \, p(y \mid \boldsymbol{x}; \boldsymbol{\Theta})
\end{equation}

The central challenge is that when training on task $t$, the model no longer has access to the data of previous tasks $\{\mathcal{D}_1, \ldots, \mathcal{D}_{t-1}\}$, which leads to catastrophic forgetting of earlier knowledge. In this work, we operate under the task-agnostic, exemplar-free setting~\citep{aljundi2019task}: the model trains on each task sequentially without storing any data from prior tasks, and at inference time, no task identity is provided.\looseness=-1

\subsection{Low-Rank Adaptation}

Low-Rank Adaptation (LoRA)~\citep{hu2021lora} is a parameter-efficient fine-tuning method~\citep{houlsby2019parameterefficienttransferlearningnlp} that exploits the observation that weight updates in pre-trained models tend to lie in a low-dimensional subspace~\citep{aghajanyan2021intrinsic}. For a pre-trained weight matrix $\boldsymbol{W} \in \mathbb{R}^{m \times n}$, LoRA constrains the update $\Delta \boldsymbol{W}$ to a low-rank decomposition $\Delta \boldsymbol{W} = \boldsymbol{B}\boldsymbol{A}$, where $\boldsymbol{B} \in \mathbb{R}^{m \times r}$, $\boldsymbol{A} \in \mathbb{R}^{r \times n}$, and $r \ll \min(m, n)$. The pre-trained weight $\boldsymbol{W}$ remains frozen during training, and the forward pass of the adapted layer becomes:\looseness=-1
\begin{equation}
    \boldsymbol{e} = (\boldsymbol{W} + \boldsymbol{B}\boldsymbol{A})\boldsymbol{h}
\end{equation}
where $\boldsymbol{h}$ and $\boldsymbol{e}$ denote the input and output of the layer, respectively. Only $\boldsymbol{B}$ and $\boldsymbol{A}$ are updated during fine-tuning to reduce the number of trainable parameters from $m \times n$ to $r \times (m + n)$.

\paragraph{LoRA in Continual Learning.} LoRA naturally lends itself to continual learning~\citep{biderman2024loralearnsforgets}: each new task can receive its own LoRA branch $(\boldsymbol{B}_t, \boldsymbol{A}_t)$ while all previous branches are frozen, preventing direct modification of earlier parameters. O-LoRA~\citep{wang2023olora} further constrains each new branch to lie in the orthogonal complement of previous branches to reduce subspace overlap across tasks. However, at inference time, since task identities are unavailable, O-LoRA aggregates all branches through addition:\looseness=-1
\begin{equation}
    \boldsymbol{e} = \left(\boldsymbol{W} + \sum_{t=1}^{T} \boldsymbol{B}_t \boldsymbol{A}_t\right) \boldsymbol{h}
\end{equation}

This forces every branch to influence every input and allows irrelevant adapters to degrade performance on earlier tasks. GainLoRA~\citep{gainlora} addresses this by introducing a learned gating module $g_t(\boldsymbol{x})$ per task that controls the contribution of each branch:\looseness=-1
\begin{equation}
    \boldsymbol{e} = \left(\boldsymbol{W} + \sum_{t=1}^{T} g_t(\boldsymbol{x}) \cdot \boldsymbol{B}_t \boldsymbol{A}_t\right) \boldsymbol{h}
\end{equation}

While effective, this approach introduces a per-task MLP gate with its own trainable parameters, and the gates themselves are susceptible to forgetting. GainLoRA mitigates this by applying Gradient Projection Memory (GPM) to the gate parameters, adding further complexity. Moreover, both the adapter and gate parameters scale linearly with the number of tasks.\looseness=-1
\section{Methodology}

\begin{figure*}
    \centering
    \includegraphics[width=0.9\linewidth]{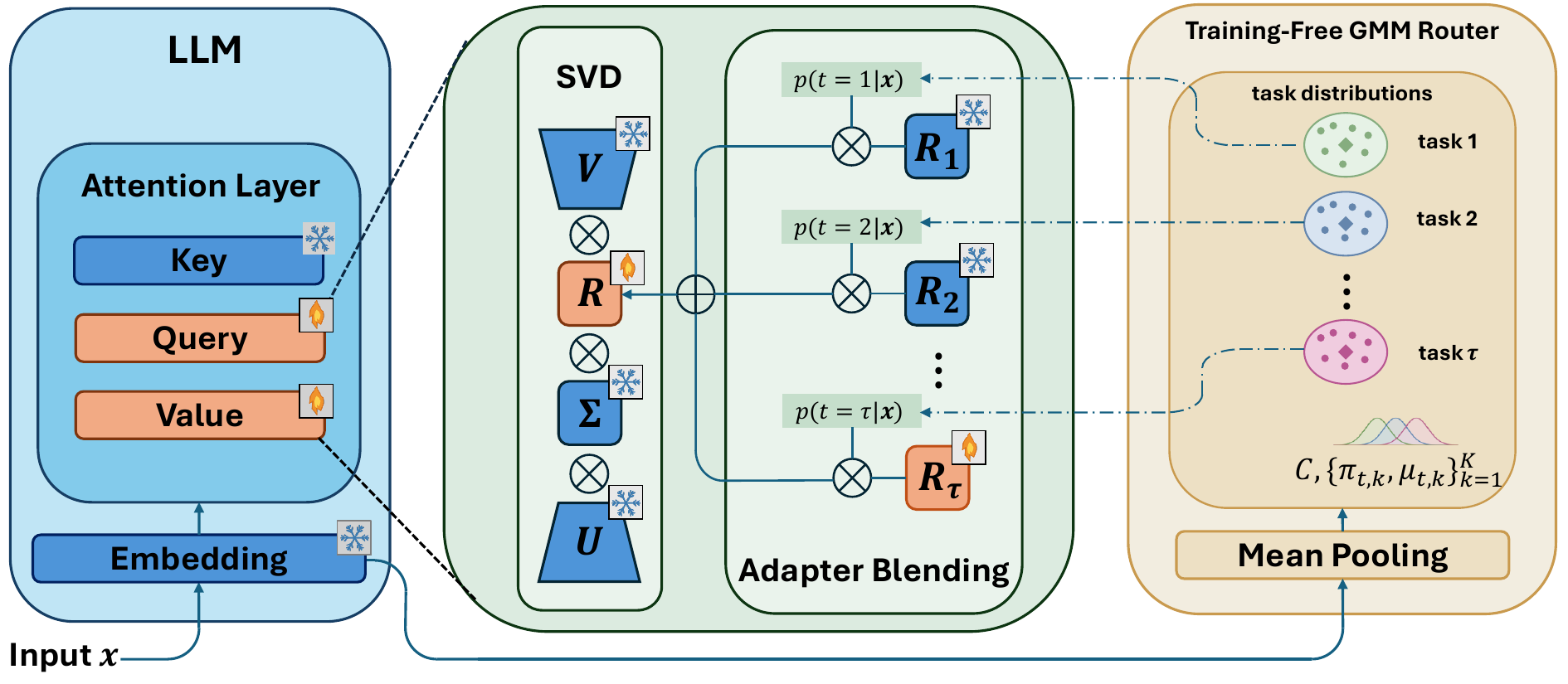}
    \caption{Latent-LoRA trains a per-task square matrix in the frozen SVD subspace and routes inputs via a training-free Gaussian mixture over pooled embeddings}
    \label{fig:system}
\end{figure*}

\subsection{Overview}
Figure~\ref{fig:system} illustrates the full pipeline of Latent-LoRA. The base language model remains frozen throughout, including its embedding layer. After training each task's adapter, we extract pooled token embeddings $\phi(\boldsymbol{x}) = \mathrm{MeanPool}(\mathrm{Emb}(\boldsymbol{x}))$ from the frozen embedding layer and fit a Gaussian mixture over them. The per-task mixtures collectively form a router that maps any input $\boldsymbol{x}$ to a posterior $p(t \mid \boldsymbol{x})$ over known tasks, without any trainable parameters (\S\ref{sec:router}). Each task's adapter $R_t$ is a compact matrix trained in the SVD-derived latent subspace of the pretrained weights and saved as an independent snapshot that no subsequent task can modify (\S\ref{sec:adapters}).\looseness=-1

\subsection{Compact Latent-Space Adapters}
\label{sec:adapters}

Standard LoRA-based continual learning methods allocate a pair of trainable low-rank matrices $(\boldsymbol{A}_t, \boldsymbol{B}_t)$ per task, with $r(m+n)$ parameters per target module. While modest compared to full fine-tuning, this cost scales linearly with the model dimension and accumulates across tasks. \looseness=-1

We adopt a more compact parameterization based on LoRA-XS~\citep{benedek2024lora-xs}, which constrains weight updates to the principal subspace of the pretrained weights. Given a target weight $\boldsymbol{W} \in \mathbb{R}^{m \times n}$, we compute its rank-$r$ truncated SVD $\boldsymbol{W} \approx \boldsymbol{U}_r \boldsymbol{\Sigma}_r \boldsymbol{V_r}^{\top}$ once and keep all three factors frozen. Here $\boldsymbol{U}_r \in \mathbb{R}^{m \times r}$ and $\boldsymbol{V}_r \in \mathbb{R}^{n \times r}$ have orthonormal columns, and $\boldsymbol{\Sigma}_r = \mathrm{diag}(\sigma_1, \ldots, \sigma_r)$ contains the top $r$ singular values. The per-task adapter is a small trainable matrix $\boldsymbol{R}_t \in \mathbb{R}^{r \times r}$, and the weight update is:
\begin{equation}
    \Delta \boldsymbol{W}_t = \boldsymbol{U}_r \, \boldsymbol{\Sigma}_r \, \boldsymbol{R}_t \, \boldsymbol{V}_r^{\top}.
    \label{eq:loraxs_update}
\end{equation}

This reduces trainable parameters per module from $r(m+n)$ to $r^2$, which is a factor of $(m+n)/r$ that grows with model size. \looseness=-1

By the Eckart-Young-Mirsky theorem~\citep{eckart1936approximation}, the subspace $\mathcal{S}_r = \{\boldsymbol{U}_r \boldsymbol{X} \boldsymbol{V}_r^{\top} : \boldsymbol{X} \in \mathbb{R}^{r \times r}\}$ is the optimal subspace for approximating gradient updates in the Frobenius norm, assuming that fine-tuning gradients lie close to the distribution of pretraining gradients. We refer the reader to \citep{benedek2024lora-xs} for a formal proof. Beyond parameter efficiency, this parameterization has a structural consequence for continual learning: because the high-dimensional factors $\boldsymbol{U}_r$ and $\boldsymbol{V}_r$ are frozen and orthonormal, the interference between any two task adapters reduces to a function of their $r \times r$  matrices alone.\looseness=-1

\paragraph{Output perturbation and interference.} Forgetting in LoRA-based CL arises when a new task's adapter perturbs the model's output on old-task inputs~\citep{goodfellow2013empirical}. We formalize this at the layer level. For an input $\boldsymbol{h}$, task $j$'s adapter produces the output perturbation:
\begin{equation}
\delta_j(\boldsymbol{h}) = \Delta \boldsymbol{W}_j \, \boldsymbol{h} = \boldsymbol{U}_r \boldsymbol{\Sigma}_r \boldsymbol{R}_j \boldsymbol{V}_r^{\top} \boldsymbol{h}  
\end{equation}

Define the \emph{latent-space projection} $\psi(\boldsymbol{h}) = \boldsymbol{V}_r^{\top} \boldsymbol{h} \in \mathbb{R}^r$ and the \emph{weighted adapter} $\tilde{\boldsymbol{R}}_t = \boldsymbol{\Sigma}_r \boldsymbol{R}_t \in \mathbb{R}^{r \times r}$. The inner product of the output perturbations from tasks $i$ and $j$ measures the extent to which task $j$'s perturbation has a component along the direction learned by task $i$, and it decomposes as:\looseness=-1
\begin{equation}
    \delta_i(\boldsymbol{h})^{\top} \delta_j(\boldsymbol{h})
    \;=\; \psi(\boldsymbol{h})^{\top} \; \tilde{\boldsymbol{R}}_i^{\top} \, \tilde{\boldsymbol{R}}_j \; \psi(\boldsymbol{h}),
    \label{eq:interference}
\end{equation}

The derivation is given in Appendix~\ref{app:proof}. This interference is a quadratic form in $\psi(\boldsymbol{h})$ whose kernel is $\tilde{\boldsymbol{R}}_i^{\top} \tilde{\boldsymbol{R}}_j$, a quantity that depends only on the adapters and singular values, with the high-dimensional projection factors $\boldsymbol{U}_r$ and $\boldsymbol{V}_r$ canceling out due to orthonormality.\looseness=-1

\paragraph{Orthogonal regularization.}
The decomposition~\eqref{eq:interference} identifies
$\tilde{\boldsymbol{R}}_i^{\top} \tilde{\boldsymbol{R}}_j$ as the quantity governing inter-task interference. We therefore regularize it directly:\looseness=-1
\begin{equation}
    \mathcal{L}_{\mathrm{ortho}}
    = \sum_{i < j}
    \bigl\|\tilde{\boldsymbol{R}}_i^{\top} \tilde{\boldsymbol{R}}_j\bigr\|_F^2.
    \label{eq:ortho_loss}
\end{equation}

This weighting arises naturally from the interference analysis: directions corresponding to larger singular values produce larger output perturbations and are penalized proportionally. We show that this regularization provides upper-bound control over the interference:\looseness=-1

\begin{proposition}
\label{prop:bound}
Under the latent-space adapter parameterization, the output interference
satisfies, for all inputs $\boldsymbol{h}$:\looseness=-1
\begin{equation}
    \bigl|\delta_i(\boldsymbol{h})^{\top} \delta_j(\boldsymbol{h})\bigr|
    \;\leq\; \bigl\|\tilde{\boldsymbol{R}}_i^{\top} \tilde{\boldsymbol{R}}_j\bigr\|_2
    \;\cdot\; \|\psi(\boldsymbol{h})\|^2.
    \label{eq:bound}
\end{equation}
\end{proposition}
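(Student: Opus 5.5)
The plan is to start from the exact decomposition~\eqref{eq:interference}, which the paper already establishes (with derivation in Appendix~\ref{app:proof}). For completeness I would briefly recall why it holds. Substituting $\delta_j(\boldsymbol{h}) = \boldsymbol{U}_r \boldsymbol{\Sigma}_r \boldsymbol{R}_j \boldsymbol{V}_r^{\top}\boldsymbol{h}$ gives
\[
\delta_i(\boldsymbol{h})^{\top}\delta_j(\boldsymbol{h}) = \boldsymbol{h}^{\top}\boldsymbol{V}_r \tilde{\boldsymbol{R}}_i^{\top}\boldsymbol{U}_r^{\top}\boldsymbol{U}_r\tilde{\boldsymbol{R}}_j\boldsymbol{V}_r^{\top}\boldsymbol{h}.
\]
Since $\boldsymbol{U}_r^{\top}\boldsymbol{U}_r = \boldsymbol{I}_r$ by orthonormality of the columns, this collapses to $\psi(\boldsymbol{h})^{\top}\boldsymbol{M}\,\psi(\boldsymbol{h})$ with $\boldsymbol{M} = \tilde{\boldsymbol{R}}_i^{\top}\tilde{\boldsymbol{R}}_j \in \mathbb{R}^{r\times r}$. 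After this step the problem is purely $r$-dimensional: bound a quadratic form $z^{\top}\boldsymbol{M}z$ with $z = \psi(\boldsymbol{h})$.

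The key step is then Cauchy--Schwarz followed by the definition of the operator norm:
\[
|z^{\top}\boldsymbol{M}z| \le \|z\|\,\|\boldsymbol{M}z\| \le \|z\|\,\|\boldsymbol{M}\|_2\,\|z\|.
\]
This yields exactly~\eqref{eq:bound}. Note that $\boldsymbol{M}$ need not be symmetric, so I would deliberately avoid any eigenvalue or Rayleigh-quotient argument, which would only control the symmetric part. Cauchy--Schwarz with the spectral norm handles the general case directly.

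I expect no real obstacle. The only point needing care is that the cancellation uses $\boldsymbol{U}_r^{\top}\boldsymbol{U}_r = \boldsymbol{I}_r$ (true for a truncated SVD). By contrast, $\boldsymbol{V}_r\boldsymbol{V}_r^{\top}$ is not the identity, which is why the bound is stated in terms of $\|\psi(\boldsymbol{h})\|$ rather than $\|\boldsymbol{h}\|$. Optionally, I would add the remarks that $\|\psi(\boldsymbol{h})\| \le \|\boldsymbol{h}\|$ and $\|\boldsymbol{M}\|_2 \le \|\boldsymbol{M}\|_F$. The latter links the bound to the regularizer~\eqref{eq:ortho_loss}: each term of $\mathcal{L}_{\mathrm{ortho}}$ upper-bounds the squared interference constant for its pair $(i,j)$.
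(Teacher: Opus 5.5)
Your proposal is correct and follows the paper's proof essentially verbatim. The paper also uses $\boldsymbol{U}_r^{\top}\boldsymbol{U}_r=\boldsymbol{I}_r$ to reduce the interference to the quadratic form $\psi(\boldsymbol{h})^{\top}\boldsymbol{M}\,\psi(\boldsymbol{h})$ with $\boldsymbol{M}=\tilde{\boldsymbol{R}}_i^{\top}\tilde{\boldsymbol{R}}_j$, and then applies Cauchy--Schwarz together with the definition of the spectral norm. Your side remarks are correct additions that the paper only gestures at: $\|\psi(\boldsymbol{h})\|\le\|\boldsymbol{h}\|$, and $\|\boldsymbol{M}\|_2\le\|\boldsymbol{M}\|_F$, the latter making the link to $\mathcal{L}_{\mathrm{ortho}}$ precise.
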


\noindent

The proof derives from ~\eqref{eq:interference} and is given in Appendix~\ref{app:proof}. As the regularization~\eqref{eq:ortho_loss} drives $\|\tilde{\boldsymbol{R}}_i^{\top} \tilde{\boldsymbol{R}}_j\|_2$ towards zero, it progressively suppresses interference across all inputs. The factor $\|\psi(\boldsymbol{h})\|^2 = \|\boldsymbol{V}_r^{\top} \boldsymbol{h}\|^2$ depends only on the input and the pretrained model's singular subspace; it cannot grow during adapter training. Crucially, the regularization target is identical to the interference kernel, with no uncontrolled residual. We contrast this with standard LoRA's incomplete regularization and discuss the implicit regularization benefits of compact adapters in Appendix~\ref{app:analysis}.\looseness=-1

The full training objective for task $t$ combines the standard language modeling loss with the orthogonal regularization:\looseness=-1
\begin{equation}
    \mathcal{L}_t = -\sum_{(\boldsymbol{x}, \boldsymbol{y}) \in \mathcal{D}_t} \log p_\Theta(\boldsymbol{y} \mid \boldsymbol{x}) 
    \;+\; \lambda \sum_{i=1}^{t-1} \bigl\|\tilde{\boldsymbol{R}}_i^{\top} \tilde{\boldsymbol{R}}_t\bigr\|_F^2,
    \label{eq:total_loss}
\end{equation}
where $\lambda$ controls the strength of the orthogonal penalty. Only $\boldsymbol{R}_t$ receives gradients; all previous adapters $\{\boldsymbol{R}_i\}_{i<t}$ are frozen snapshots.\looseness=-1

\paragraph{Snapshot isolation.} 
After training on task $t$, the adapter $\boldsymbol{R}_t$ is saved as an independent snapshot and never modified. Once all adapters are frozen, performance on old tasks can only change through two channels: the router assigning a nonzero probability to the wrong adapter, or interference between adapters when they are blended via $\boldsymbol{R}(\boldsymbol{x}) = \sum_t p(t \mid \boldsymbol{x}) \boldsymbol{R}_t$. The orthogonal regularization controls the second; the training-free router, introduced next, controls the first.\looseness=-1

\begin{figure*}
    \centering
    \includegraphics[width=0.9\linewidth]{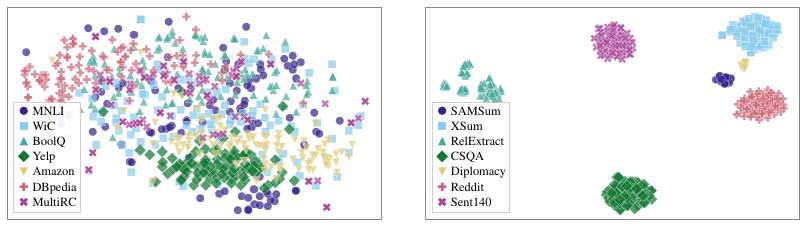}
    \caption{t-SNE projection of pooled token embeddings from the frozen T5-Large embedding layer. Left: Long Sequence. Right: SuperNI.\looseness=-1}
    \label{fig:tsne}
\end{figure*}

% \begin{figure}
%     \centering
%     \includegraphics[width=\linewidth]{figures/tsne_paper (7).pdf}
%     \caption{t-SNE projection of pooled token embeddings from the frozen T5-Large embedding layer. Left: Long Sequence. Right: SuperNI.}
%     \label{fig:tsne}
% \end{figure}

\subsection{Training-Free Task Router}
\label{sec:router}

In the task-agnostic setting, the model must determine which adapter(s) to apply without receiving a task identity. Prior methods either aggregate all adapters indiscriminately~\citep{wang2023olora} or introduce learned gating modules that require their own protection against forgetting~\citep{gainlora}. We take a different approach: we observe that the pooled token embeddings produced by the base model's frozen embedding layer already separate task distributions with sufficient margin for accurate routing, and exploit this with a simple probabilistic model that requires no gradient-based training.\looseness=-1

\paragraph{Embedding extraction.}
For an input text $\boldsymbol{x}$, we extract a fixed-size representation using the base model's own embedding layer:\looseness=-1
\begin{equation}
    \phi(\boldsymbol{x}) = \mathrm{MeanPool}\bigl(\mathrm{Emb}(\boldsymbol{x})\bigr),
    \label{eq:embed}
\end{equation}
where $\mathrm{Emb}$ denotes the model's input embedding table and $\mathrm{MeanPool}$ averages the resulting token-level vectors into a single vector $\phi(\boldsymbol{x}) \in \mathbb{R}^d$. Because LoRA adapters are applied to the attention projections and not to the embedding layer, $\phi(\boldsymbol{x})$ is stationary across the task sequence. Therefore, the distributions the router was fitted on do not drift as new tasks are learned.\looseness=-1

Figure~\ref{fig:tsne} visualizes $\phi(\boldsymbol{x})$ via t-SNE~\citep{maaten2008tsne} for tasks from two CL benchmarks, Long Sequence~\citep{razdaibiedina2023progressive} and SuperNI~\citep{wang2022super} using the frozen T5-Large~\citep{raffel2020t5} embedding layer. Task distributions form well-separated clusters without task-specific training, suggesting that pretrained embeddings provide useful structure for routing. Similar patterns across more language model scales are shown in Appendix~\ref{app:vis}.\looseness=-1

\paragraph{Gaussian mixture model.}
After training the adapter for task $t$, we fit a task-specific Gaussian mixture model (GMM) over the training embeddings $\{\phi(\boldsymbol{x}): \boldsymbol{x} \in \mathcal{D}_t\}$. Each task's distribution is modeled as a mixture of $K$ Gaussians~\citep{reynolds2009gaussian}:\looseness=-1
\begin{equation}
    p\bigl(\phi (\boldsymbol{x})\mid t\bigr) = \sum_{k=1}^{K} \pi_{t,k} \;
    \mathcal{N}\bigl(\phi (\boldsymbol{x}) \,\big|\, \mu_{t,k}, \mathbf{C}\bigr),
    \label{eq:gmm}
\end{equation}
where $\pi_{t,k}$ and $\mu_{t,k}$ are the weight and mean of component $k$ for task $t$, initialized via $K$-means clustering on the task's embeddings, and $\mathbf{C}$ is a covariance matrix shared across all tasks and components. Using multiple components per task allows the model to capture multi-modal task distributions; for instance, a question-answering task may contain both short factoid questions and long passage-based questions, which occupy different regions of the embedding space.\looseness=-1

The shared covariance is computed as the regularized pooled within-task scatter:\looseness=-1
\begin{equation}
    \mathbf{C} = \frac{\sum_{t=1}^{T} \mathbf{S}_t}{\sum_{t=1}^{T} n_t}
    + \epsilon \boldsymbol{I},
    \label{eq:cov}
\end{equation}
where $\mathbf{S}_t = \sum_{\boldsymbol{x} \in \mathcal{D}_t}(\phi(\boldsymbol{x}) - \bar\phi_t) (\phi(\boldsymbol{x}) - \bar\phi_t)^{\top}$ is the scatter matrix for task $t$, $n_t$ is the number of training samples, $\epsilon$ is a regularization coefficient for numerical stability, and $\bar\phi_t$ is the mean embedding for task $t$.\looseness=-1

% ── T5-Large main results table ──────────────────────────────────────
\begin{table*}[t]
\centering
\caption{Main results on T5-Large across both benchmarks and task 
orderings. AP: average performance (\%)$\uparrow$. FM: forgetting measure}
\label{tab:main_t5large}
\setlength{\tabcolsep}{5pt}
\begin{tabular}{l|cccccccc}
\toprule
 & \multicolumn{4}{c}{SuperNI} & \multicolumn{4}{c}{Long Sequence} \\
\cmidrule(lr){2-5} \cmidrule(lr){6-9}
 & \multicolumn{2}{c}{Order 1} & \multicolumn{2}{c}{Order 2} & \multicolumn{2}{c}{Order 3} & \multicolumn{2}{c}{Order 4} \\
% \cmidrule(lr){2-3} \cmidrule(lr){4-5} \cmidrule(lr){6-7} \cmidrule(lr){8-9}
Method & AP$\uparrow$ & FM$\downarrow$ & AP$\uparrow$ & FM$\downarrow$ & AP$\uparrow$ & FM$\downarrow$ & AP$\uparrow$ & FM$\downarrow$ \\
\midrule
LFPT5      & 39.03 & 10.88 & 29.71 & 20.72 & 66.62 & 14.51 & 67.40 & 13.11 \\
EWC        & 16.91 & 27.11 & 20.12 & 28.37 & 47.71 & 20.14 & 52.19 & 31.17 \\
LWF        & 19.34 & 24.74 & 30.34 & 17.63 & 50.73 & 19.17 & 47.94 & 33.22 \\
TaSL       & 25.72 & 19.92 & 28.09 & 18.20 & 71.37 & 6.20 & 72.91 & 6.03 \\
KIFLoRA    & 28.31 & 17.21 & 30.31 & 16.27 & 71.34 & 7.13 & 73.21 & 6.58 \\
SeqLoRA    & 8.64 & 48.41 & 7.17 & 47.22 & 48.86 & 29.78 & 30.46 & 44.97 \\
IncLoRA    & 13.08 & 42.26 & 16.39 & 36.51 & 62.75 & 12.75 & 61.76 & 15.35 \\
O-LoRA     & 27.91 & 17.23 & 33.35 & 11.11 & 71.31 & 3.41 & 71.60 & 4.16 \\
InfLoRA    & 39.02 & 8.02 & 39.97 & 8.22 & 75.02 & 3.01 & 75.35 & 2.39 \\
GainLoRA   & 46.77 & 2.14 & 47.25 & 2.12 & 78.21 & 0.72 & 76.26 & 1.14 \\
\midrule
Latent-LoRA     & \textbf{48.60} & \textbf{0.01} & \textbf{49.75} & \textbf{0.01} & \textbf{79.95} & \textbf{0.57} & \textbf{79.87} & \textbf{0.73} \\
\bottomrule
\end{tabular}
\end{table*}

Sharing a single covariance across tasks follows the Linear Discriminant Analysis (LDA) principle of pooled within-class scatter under a shared-covariance assumption, an approach also used in recent continual learning work~\citep{momeni2025klda,goswami2023fecam}. In our setting, this rescales the embedding space according to common variation across tasks, emphasizing task-discriminative directions. When a new task arrives, $\mathbf{C}$ is updated via~\eqref{eq:cov} without requiring access to previous tasks' data.\looseness=-1

\paragraph{Soft routing and adapter blending.}
At inference, the router computes the log-likelihood of $\phi(\boldsymbol{x})$ under each task's mixture and derives the task posterior under a uniform prior:\looseness=-1
\begin{equation}
    p(t \mid \boldsymbol{x}) = \frac{p(\phi(\boldsymbol{x}) \mid t)}
    {\sum_{t'=1}^{T} p(\phi(\boldsymbol{x}) \mid t')}.
    \label{eq:posterior}
\end{equation}

The posterior is used to blend all stored adapters into a single
effective adapter for input $\boldsymbol{x}$:\looseness=-1
\begin{equation}
    \boldsymbol{R}(\boldsymbol{x}) = \sum_{t=1}^{T} p(t \mid \boldsymbol{x}) \, \boldsymbol{R}_t.
    \label{eq:blend}
\end{equation}

When the posterior assigns a nonzero weight to a non-target adapter, the interference bound (Proposition~\ref{prop:bound}) constrains interference from non-target adapters. \looseness=-1

The router has two properties that distinguish it from learned alternatives. First, it has \emph{no trainable parameters}: the means and weights are set analytically from $K$-means, and the covariance is a pooled scatter statistic. No gradient ever flows into the router, so it does not introduce an additional source of forgetting and requires no auxiliary protection mechanism. Second, it is \emph{incremental}: adding a new task requires only fitting $K$ means on the new task's embeddings and updating the shared covariance, with no need to revisit or replay data from earlier tasks.\looseness=-1

\section{Experiments}
\label{sec:experiments}

\subsection{Setup}

\paragraph{Benchmarks.}
We evaluate on two continual learning benchmarks for LLMs. \textbf{SuperNI}~\citep{wang2022super} covers diverse NLP tasks including dialogue generation, information extraction, question answering, summarization, and sentiment analysis. Following~\citep{zhao2024sapt}, three tasks are selected from each category, yielding 15 tasks arranged into two orderings (Orders 1 and 2). \textbf{Long Sequence}~\citep{razdaibiedina2023progressive} consists of 15 classification tasks spanning natural language inference, sentiment analysis, topic classification, and question answering, also arranged into two orderings (Orders 3 and 4). We adopt the same task selections and orderings as GainLoRA~\citep{gainlora} and O-LoRA~\citep{wang2023olora} to ensure direct comparability. Full task lists are provided in Appendix~\ref{app:setup}.\looseness=-1

\paragraph{Metrics.}
We report two standard metrics. \textbf{Average Performance (AP)} is the mean score across all tasks after training on the final task, and \textbf{Forgetting Measure (FM)} measures the mean drop from each task's peak score to its final score:\looseness=-1
\begin{align}
    \mathrm{AP} &= \frac{1}{T}\sum_{j=1}^{T} a_{T,j}, \label{eq:ap} \\
    \mathrm{FM} &= \frac{1}{T{-}1}\sum_{j=1}^{T-1} \Bigl(\max_{i \geq j}\, a_{i,j} - a_{T,j}\Bigr), \label{eq:fm}
\end{align}
where $T$ is the total number of tasks and $a_{i,j}$ is the model's performance on task $j$ after training on task $i$.\looseness=-1

\paragraph{Baselines.}

We compare against a range of continual learning methods. These include regularization-based approaches such as EWC~\citep{kirkpatrick2017ewc} and LWF~\citep{li2017lwf}, the prompt-based method LFPT5~\citep{qin2022lfpt5}, and several LoRA-based methods: SeqLoRA, which sequentially fine-tunes a single adapter without forgetting mitigation; IncLoRA~\citep{wang2023olora}; O-LoRA~\citep{wang2023olora}; InfLoRA~\citep{liang2024inflora}; KIFLoRA~\citep{kiflora}; TaSL~\citep{tasl}; and GainLoRA~\citep{gainlora}. Following prior work~\citep{wang2023olora,gainlora}, we focus on the task-agnostic setting~\citep{zeng2019continuous} where task identities are unavailable at inference, and all methods operate without access to replay data from previous tasks.\looseness=-1

\paragraph{Implementation details.}
Latent-LoRA is model-agnostic and applicable to any transformer-based architecture. Following existing CL works~\citep{wang2023olora,gainlora,zhao2024sapt}, all methods use instruction tuning~\citep{wei2022finetuned} and are optimized with AdamW~\citep{loshchilov2019adamw}. To ensure fair comparisons, all LoRA-based baselines incorporate adapters into the query and value projections of each transformer block~\citep{vaswani2017attention} with rank $r{=}8$, following the protocol of prior work~\citep{wang2023olora,gainlora}. Our method uses rank $r{=}32$. We evaluate across both encoder-decoder~\citep{raffel2020t5} and decoder-only~\citep{touvron2023llama,dubey2024llama3} architectures at multiple scales. Each experiment is repeated three times with different seeds and the average is reported. Details on learning rates, batch sizes, adapter training, and GMM fitting are provided in Appendix~\ref{app:implementation}.\looseness=-1

\subsection{Main Results}
\label{sec:main_results}

Table~\ref{tab:main_t5large} presents the main results on T5-Large across both benchmarks and task orderings. Our method achieves the highest AP and the lowest FM on all four configurations. On SuperNI, where tasks span diverse NLP categories including generation and classification, we outperform the previous best method (GainLoRA) by 1.8--2.5 points in AP while reducing forgetting to near zero. On Long Sequence, we improve over GainLoRA by 1.7--3.6 points in AP with forgetting below 1\%.\looseness=-1

Notably, methods that sum adapters at inference, such as O-LoRA and InfLoRA, suffer from substantially higher forgetting, particularly on SuperNI, where task diversity makes adapter interference more severe. GainLoRA mitigates this through learned gating modules, but at the cost of additional trainable parameters and a GPM-based protection mechanism. Our method achieves stronger results without any learned routing component and with fewer parameters per task.\looseness=-1

Among non-LoRA baselines, LFPT5 performs competitively on SuperNI Order 1 but degrades on other configurations, while regularization-based methods (EWC, LWF) and sequential approaches (SeqLoRA) exhibit high forgetting across the board, suggesting that parameter-level regularization alone is insufficient for long task sequences.\looseness=-1

\subsection{Scaling to Larger Architectures}
\label{sec:scaling}

To assess whether our method generalizes beyond T5-Large, we evaluate on four additional model scales spanning both encoder-decoder and decoder-only architectures. Table~\ref{tab:t5_3b} reports results on T5-XLarge and Table~\ref{tab:llama} on three Llama variants. Latent-LoRA consistently outperforms baselines across all scales, with the performance gap widening on larger models. On Llama-2-13B, we surpass GainLoRA by over 3 points in AP while maintaining near-zero forgetting. \looseness=-1

% ── T5-3B table ──────────────────────────────────────────────────────
\begin{table}[t]
\centering
\caption{Results on T5-Xlarge. AP (\%)$\uparrow$ / FM (\%)$\downarrow$.}
\label{tab:t5_3b}
\small
\setlength{\tabcolsep}{4pt}
\resizebox{\columnwidth}{!}{%
\begin{tabular}{lcccc}
\toprule
 & \multicolumn{2}{c}{SuperNI} & \multicolumn{2}{c}{Long Sequence} \\
\cmidrule(lr){2-3} \cmidrule(lr){4-5}
Method & Order 1 & Order 2 & Order 3 & Order 4 \\
\midrule
O-LoRA & 37.91 / 10.12 & 41.22 / 6.36 & 74.98 / 1.63 & 76.63 / 3.10 \\
GainLoRA & 51.29 / 3.13 & 50.86 / 2.12 & 81.21 / \textbf{0.55} & 79.91 / \textbf{0.81} \\
Latent-LoRA & \textbf{52.64} / \textbf{0.01} & \textbf{53.71} / \textbf{0.01} & \textbf{81.61} / 0.74 & \textbf{80.39} / 1.00 \\
\bottomrule
\end{tabular}
}
\end{table}

\begin{table}[t]
\centering
\caption{Results on Llama models, SuperNI. AP (\%)$\uparrow$ / FM (\%)$\downarrow$.}
\label{tab:llama}
\small
\begin{tabular}{llcc}
\toprule
Model & Method & Order 1 & Order 2 \\
\midrule
& O-LoRA & 40.13 / 10.09 & 41.23 / 6.12 \\
& GainLoRA & 52.11 / 2.14 & 50.96 / 2.19 \\
\multirow{-3}{*}{Llama-2-7B}
  & LatentLoRA & \textbf{54.03} / \textbf{0.01} & \textbf{54.31} / \textbf{0.01} \\
\midrule
& O-LoRA & 42.91 / 9.19 & 42.33 / 5.71 \\
& GainLoRA & 54.17 / 1.91 & 53.47 / 2.10 \\
\multirow{-3}{*}{Llama-3-8B}
  & LatentLoRA & \textbf{56.09} / \textbf{0.01} & \textbf{56.33} / \textbf{0.01} \\
\midrule
& O-LoRA & 44.17 / 10.52 & 43.12 / 9.96 \\
& GainLoRA & 54.93 / 1.98 & 53.17 / 2.27 \\
\multirow{-3}{*}{Llama-2-13B}
  & LatentLoRA & \textbf{56.21} / \textbf{0.01} & \textbf{56.72} / \textbf{0.01} \\
\bottomrule
\end{tabular}
\end{table}

\begin{figure*}[t]
    \centering
    \includegraphics[width=0.9\linewidth]{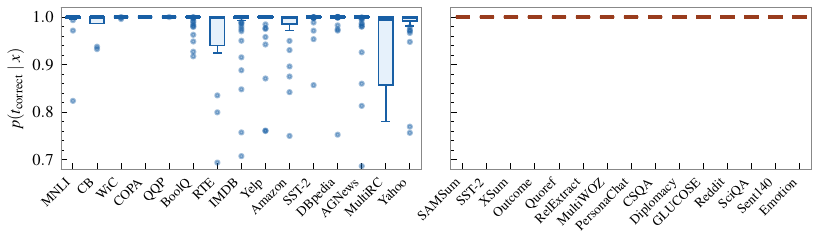}
    \caption{Router posterior probability of the correct task across test inputs for each task, evaluated on the embeddings of T5-Large. Left: Long Sequence: high confidence on most tasks with some variation. Right: SuperNI: near-perfect routing across all tasks.\looseness=-1}
    \label{fig:router_posterior}
\end{figure*}

\subsection{Ablation Study}
\label{sec:ablation}

We ablate our system on T5-Large to evaluate the compact adapter and the GMM router independently.\looseness=-1

\paragraph{Adapter.}
Table~\ref{tab:ablation_adapter} compares three configurations under 
sum-at-inference (no router): (i)~O-LoRA, using standard LoRA with orthogonal regularization on the down-projection matrices; 
(ii)~our compact adapter without orthogonal regularization 
($\lambda{=}0$) labeled as Latent-LoRA$^*$; and (iii)~our compact adapter with $\Sigma$-weighted orthogonal regularization (Eq.~\eqref{eq:ortho_loss}). Adding the $\Sigma$-weighted orthogonal penalty substantially improves the compact adapter, allowing it to surpass O-LoRA on most orderings while using far fewer trainable parameters.\looseness=-1

\begin{table}[t]
\centering
\caption{Ablation: adapter architecture under sum-at-inference (no router), T5-Large. AP$\uparrow$ / FM$\downarrow$.}
\label{tab:ablation_adapter}
% \small
\setlength{\tabcolsep}{4pt}
\resizebox{\columnwidth}{!}{%

\begin{tabular}{lcccc}
\toprule
 & \multicolumn{2}{c}{SuperNI} & \multicolumn{2}{c}{Long Sequence} \\
\cmidrule(lr){2-3} \cmidrule(lr){4-5}
Method & 1 & 2 & 3 & 4 \\
\midrule
O-LoRA & 27.91 / 17.23 & 33.35 / 11.11 & \textbf{71.31} / 3.41 & 71.60 / 4.16 \\
LatentLoRA$^*$ & 16.71 / 30.11 & 23.91 / 23.44 & 57.31 / 19.34 & 59.21 / 18.71 \\
LatentLoRA & \textbf{33.30} / \textbf{12.94} & \textbf{35.90} / \textbf{8.83} & 70.12 / \textbf{3.03} & \textbf{73.19}/ \textbf{3.97} \\
\bottomrule
\end{tabular}
}
\end{table}

\paragraph{Router.}
Figure~\ref{fig:router_posterior} evaluates the GMM router's accuracy. For each task, we compute the posterior probability assigned to the correct task, $p(t_{\mathrm{correct}} \mid \boldsymbol{x})$, across all test inputs. Each box plot summarizes this distribution over a fixed-size subsample. On SuperNI, the router assigns near-perfect probability to the correct task for nearly all inputs, consistent with the cluster separation in Figure~\ref{fig:tsne}. On Long Sequence, most tasks are routed with high confidence, though a few tasks show a wider spread. Results across all five model scales are reported in Appendix~\ref{app:vis}.\looseness=-1

\subsection{Parameter Efficiency}
\label{sec:efficiency}

Standard LoRA-based methods allocate $r(m{+}n)$ trainable parameters per target module, where $m$ and $n$ are the weight dimensions. With $r{=}8$, this amounts to 2.36M parameters per task on T5-Large and grows to 6.55M on Llama-2-13B. Our compact adapters require only $r^2$ parameters per module, a quantity independent of model dimension. With $r{=}32$, each task adds just 147K parameters on T5-Large and 82K on Llama-2-13B, a reduction of 16$\times$ and 80$\times$ respectively. Full comparison across all models is in Appendix~\ref{app:implementation} This gap widens with model scale, making the approach increasingly attractive for larger architectures. The GMM router adds negligible training cost: fitting $K$ means and updating a shared covariance takes seconds per task, with no gradient computation.\looseness=-1

During inference, routing requires only the embedding pass, which is already computed as part of the model's forward pass. This is followed by Mahalanobis distance evaluations against $K {\times} T$ Gaussian components and a softmax. This is a single vector operation on the pooled embedding, substantially cheaper than the per-task MLP forward passes required by learned gating approaches. The blended adapter $\boldsymbol{R}(\boldsymbol{x})$ is then inserted into each attention projection, adding the same overhead as a single LoRA forward pass.\looseness=-1
\section{Limitations}
\label{sec:limitations}

While our method achieves strong performance with minimal overhead, several limitations should be acknowledged. First, the shared covariance matrix $\mathbf{C} \in \mathbb{R}^{d \times d}$ must be inverted each time a new task is added, with $d$ being the embedding dimension of the base model. For models with large embedding dimensions (e.g., $d{=}4096$ for Llama-2-7B or $d{=}5120$ for Llama-2-13B), this inversion has $O(d^3)$ cost and requires careful numerical regularization. In practice, we find the Cholesky-based inversion with the regularization term $\epsilon \boldsymbol{I}$ to be stable across all models tested, but the cost grows cubically with embedding dimension and could become a concern for future models with substantially larger embedding spaces.

Second, the router relies on the assumption that task distributions are separable in the pooled embedding space. While this holds across all five model scales and both benchmarks in our experiments, tasks with highly overlapping input distributions (e.g., two sentiment analysis tasks differing only in domain) may not be reliably distinguished by the GMM. The router's soft blending provides graceful degradation in such cases, but the ceiling on routing accuracy is ultimately set by the separability of the frozen embeddings.

Finally, while we demonstrate strong results across five model scales up to 13B parameters and observed numbers improving with model size, we have not evaluated on models significantly larger than this. Whether the frozen embedding separability and compact adapter capacity hold at scales beyond 13B is an empirical question that warrants further investigation.
\bibliography{refs}
\newpage
\appendix
% \appendix

\section{Proofs}
\label{app:proof}

\subsection{Interference Decomposition}
\label{app:interference_proof}

We derive Eq.~\eqref{eq:interference} from the main text. Recall that each task's output perturbation is given by:\looseness=-1
\begin{equation*}
    \delta_j(\boldsymbol{h}) = \boldsymbol{U}_r \boldsymbol{\Sigma}_r \boldsymbol{R}_j \boldsymbol{V}_r^{\top} \boldsymbol{h}.
\end{equation*}

The inner product of the perturbations from tasks $i$ and $j$ is:\looseness=-1
\begin{align}
    \delta_i(\boldsymbol{h})^{\top} \delta_j(\boldsymbol{h})
    &= \bigl(\boldsymbol{U}_r \boldsymbol{\Sigma}_r \boldsymbol{R}_i \boldsymbol{V}_r^{\top} \boldsymbol{h}\bigr)^{\top}
       \bigl(\boldsymbol{U}_r \boldsymbol{\Sigma}_r \boldsymbol{R}_j \boldsymbol{V}_r^{\top} \boldsymbol{h}\bigr) \nonumber \\
    &= \boldsymbol{h}^{\top} \boldsymbol{V}_r \boldsymbol{R}_i^{\top} \boldsymbol{\Sigma}_r^{\top} \boldsymbol{U}_r^{\top}
       \boldsymbol{U}_r \boldsymbol{\Sigma}_r \boldsymbol{R}_j \boldsymbol{V}_r^{\top} \boldsymbol{h} \nonumber \\
    &= \boldsymbol{h}^{\top} \boldsymbol{V}_r \boldsymbol{R}_i^{\top} \boldsymbol{\Sigma}_r \boldsymbol{\Sigma}_r \boldsymbol{R}_j \boldsymbol{V}_r^{\top} \boldsymbol{h}
    \label{eq:step_uru}
\end{align}

where step~\eqref{eq:step_uru} uses $\boldsymbol{U}_r^{\top} \boldsymbol{U}_r = \boldsymbol{I}_r$ (since $\boldsymbol{U}_r$ has orthonormal columns) and $\boldsymbol{\Sigma}_r^{\top} = \boldsymbol{\Sigma}_r$ (since $\boldsymbol{\Sigma}_r$ is a diagonal matrix with real entries).\looseness=-1

Substituting the definitions $\psi(\boldsymbol{h}) = \boldsymbol{V}_r^{\top} \boldsymbol{h}$ and $\tilde{\boldsymbol{R}}_t = \boldsymbol{\Sigma}_r \boldsymbol{R}_t$:\looseness=-1
\begin{align}
    \delta_i(\boldsymbol{h})^{\top} \delta_j(\boldsymbol{h})
    &= \boldsymbol{h}^{\top} \boldsymbol{V}_r \boldsymbol{R}_i^{\top} \boldsymbol{\Sigma}_r \boldsymbol{\Sigma}_r \boldsymbol{R}_j \boldsymbol{V}_r^{\top} \boldsymbol{h} \nonumber \\
    &= \bigl(\boldsymbol{V}_r^{\top} \boldsymbol{h}\bigr)^{\top} \bigl(\boldsymbol{\Sigma}_r \boldsymbol{R}_i\bigr)^{\top} \bigl(\boldsymbol{\Sigma}_r \boldsymbol{R}_j\bigr) \bigl(\boldsymbol{V}_r^{\top} \boldsymbol{h}\bigr) \nonumber \\
    &= \psi(\boldsymbol{h})^{\top} \; \tilde{\boldsymbol{R}}_i^{\top} \, \tilde{\boldsymbol{R}}_j \; \psi(\boldsymbol{h}).
\end{align}

Note that the high-dimensional factors $\boldsymbol{U}_r \in \mathbb{R}^{m \times r}$ and $\boldsymbol{V}_r \in \mathbb{R}^{n \times r}$ cancel entirely: $\boldsymbol{U}_r$ vanishes through the orthonormality condition, and $\boldsymbol{V}_r$ is absorbed into the $r$-dimensional projection $\psi(\boldsymbol{h})$. The resulting expression depends only on the small $r \times r$ adapter matrices and the singular values.\looseness=-1

\subsection{Interference Bound}
\label{app:bound_proof}

We prove Proposition~\ref{prop:bound}: for all inputs $\boldsymbol{h}$,
\begin{equation*}
    \bigl|\delta_i(\boldsymbol{h})^{\top} \delta_j(\boldsymbol{h})\bigr|
    \;\leq\; \bigl\|\tilde{\boldsymbol{R}}_i^{\top} \tilde{\boldsymbol{R}}_j\bigr\|_2
    \;\cdot\; \|\psi(\boldsymbol{h})\|^2.
\end{equation*}

\begin{proof}
From the interference decomposition (Appendix~\ref{app:interference_proof}), we have,
\begin{equation*}
    \delta_i(\boldsymbol{h})^{\top} \delta_j(\boldsymbol{h})
    = \psi(\boldsymbol{h})^{\top} \boldsymbol{M} \, \psi(\boldsymbol{h}),
\end{equation*}
where $\boldsymbol{M} = \tilde{\boldsymbol{R}}_i^{\top} \tilde{\boldsymbol{R}}_j \in \mathbb{R}^{r \times r}$.\looseness=-1

For any matrix $\boldsymbol{M}$ and vector $\boldsymbol{u}$, the following standard inequality holds:\looseness=-1
\begin{equation*}
    \bigl|\boldsymbol{u}^{\top} \boldsymbol{M} \, \boldsymbol{u}\bigr|
    \;\leq\; \|\boldsymbol{M}\|_2 \, \|\boldsymbol{u}\|^2,
\end{equation*}
where $\|\boldsymbol{M}\|_2 = \sup_{\|\boldsymbol{v}\|=1} \|\boldsymbol{M}\boldsymbol{v}\|$ is the spectral norm (largest singular value) of $\boldsymbol{M}$. This follows from Cauchy--Schwarz:\looseness=-1
\begin{align}
    \bigl|\boldsymbol{u}^{\top} \boldsymbol{M} \, \boldsymbol{u}\bigr|
    &\leq \|\boldsymbol{u}\| \cdot \|\boldsymbol{M} \, \boldsymbol{u}\| \nonumber \\
    &\leq \|\boldsymbol{u}\| \cdot \|\boldsymbol{M}\|_2 \, \|\boldsymbol{u}\| \nonumber \\
    &= \|\boldsymbol{M}\|_2 \, \|\boldsymbol{u}\|^2.
\end{align}

Applying this with $\boldsymbol{u} = \psi(\boldsymbol{h})$ and $\boldsymbol{M} = \tilde{\boldsymbol{R}}_i^{\top} \tilde{\boldsymbol{R}}_j$:
\begin{equation*}
    \bigl|\delta_i(\boldsymbol{h})^{\top} \delta_j(\boldsymbol{h})\bigr|
    \;\leq\; \bigl\|\tilde{\boldsymbol{R}}_i^{\top} \tilde{\boldsymbol{R}}_j\bigr\|_2
    \;\cdot\; \|\psi(\boldsymbol{h})\|^2.
\end{equation*}

Since $\psi(\boldsymbol{h}) = \boldsymbol{V}_r^{\top} \boldsymbol{h}$ and $\boldsymbol{V}_r$ is fixed from the pretrained SVD, the factor $\|\psi(\boldsymbol{h})\|^2$ depends only on the input and the pretrained model's right singular vectors. It is independent of the adapter parameters and cannot be increased by training. Therefore, as the regularization loss $\mathcal{L}_{\mathrm{ortho}}$ drives $\|\tilde{\boldsymbol{R}}_i^{\top} \tilde{\boldsymbol{R}}_j\|_2 \to 0$, the interference $|\delta_i(\boldsymbol{h})^{\top} \delta_j(\boldsymbol{h})|$ is driven to zero uniformly over all inputs $\boldsymbol{h}$.\looseness=-1
\end{proof}
\section{Compact Adapters vs. Standard LoRA}
\label{app:analysis}

This appendix provides additional analysis of the compact latent-space adapter introduced in Section~\ref{sec:adapters}, contrasting its regularization properties with those of standard LoRA and discussing the role of parameter capacity in continual learning.\looseness=-1

\subsection{Incomplete Regularization in Standard LoRA}

In standard LoRA, each task's weight update takes the form $\Delta \boldsymbol{W}_t = \boldsymbol{B}_t \boldsymbol{A}_t$, where both $\boldsymbol{A}_t$ and $\boldsymbol{B}_t$ are trainable. The interference between two tasks' perturbations is,\looseness=-1
\begin{equation}
    \delta_i(\boldsymbol{h})^{\top} \delta_j(\boldsymbol{h}) 
    = \boldsymbol{h}^{\top} \boldsymbol{A}_i^{\top} \boldsymbol{B}_i^{\top} 
    \boldsymbol{B}_j \boldsymbol{A}_j \boldsymbol{h}.
\end{equation}

Existing continual learning methods such as O-LoRA~\citep{wang2023olora} penalize $\|\boldsymbol{A}_i^{\top} \boldsymbol{A}_j\|$ to encourage orthogonality between the down-projection subspaces. However, the up-projections $\boldsymbol{B}_i, \boldsymbol{B}_j$ remain unconstrained. Because these factors enter the interference multiplicatively, even perfect orthogonality in $\boldsymbol{A}$ does not eliminate interference: the $\boldsymbol{B}_i^{\top} \boldsymbol{B}_j$ term can still take arbitrary values. No uniform bound over all inputs analogous to Proposition~\ref{prop:bound} can be established for standard LoRA.\looseness=-1

In the compact parameterization of Section~\ref{sec:adapters}, the input-facing projection ($\boldsymbol{\Sigma}_r \boldsymbol{V}_r^{\top}$, analogous to $\boldsymbol{A}$) and the output-facing projection ($\boldsymbol{U}_r$, analogous to $\boldsymbol{B}$) are both frozen from the pretrained SVD. The trainable $\boldsymbol{R}_t$ is the sole degree of freedom, allowing the bound in Proposition~\ref{prop:bound} to cover the complete interference expression.\looseness=-1

\subsection{Compact Capacity as Implicit Regularization}

Each compact adapter operates with $r^2$ free parameters within a fixed subspace, compared to $r(m+n)$ in standard LoRA. Recent work suggests that restricting the trainable parameter space reduces forgetting: \citet{biderman2024loralearnsforgets} attribute LoRA's lower forgetting relative to full fine-tuning to its reduced capacity, \citet{aghajanyan2021intrinsic} show that effective fine-tuning occurs in a low-dimensional intrinsic subspace, and \citet{azghan2026gatedadaptationcontinuallearning} observe that structured parameter-efficient tuning improves cross-task consistency. Our parameterization pushes this further by confining each adapter to an $r^2$-dimensional space anchored to the pretrained principal subspace. This benefit is most apparent when combined with the router, which limits the number of adapters contributing to each input.\looseness=-1
\begin{table*}[htbp]
\centering
\caption{Tasks in the Long Sequence benchmark.}
\label{tab:long_tasks}
\begin{tabular}{llll}
\toprule
Task & Category & Domain & Metric \\
\midrule
Yelp & Sentiment analysis & Yelp reviews & Accuracy \\
Amazon & Sentiment analysis & Amazon reviews & Accuracy \\
IMDB & Sentiment analysis & Movie reviews & Accuracy \\
SST-2 & Sentiment analysis & Movie reviews & Accuracy \\
DBpedia & Topic classification & Wikipedia & Accuracy \\
AG News & Topic classification & News & Accuracy \\
Yahoo & Topic classification & Yahoo Q\&A & Accuracy \\
MNLI & Natural language inference & Various & Accuracy \\
QQP & Paraphrase detection & Quora & Accuracy \\
RTE & Natural language inference & News, Wikipedia & Accuracy \\
CB & Natural language inference & Various & Accuracy \\
WiC & Word sense disambiguation & Lexical databases & Accuracy \\
COPA & Question answering & Blogs, encyclopedia & Accuracy \\
BoolQ & Boolean question answering & Wikipedia & Accuracy \\
MultiRC & Question answering & Various & Accuracy \\
\bottomrule
\end{tabular}
\end{table*}

\section{Experimental Setup}
\label{app:setup}

\subsection{Benchmark Details}

\paragraph{Long Sequence Benchmark.}
The Long Sequence benchmark~\citep{razdaibiedina2023progressive} consists of 15 classification tasks drawn from established NLP datasets. Table~\ref{tab:long_tasks} summarizes each task's category, domain, and evaluation metric. All tasks are evaluated using accuracy.\looseness=-1

\paragraph{SuperNI Benchmark.}
The SuperNI benchmark~\citep{wang2022super} consists of 15 tasks spanning five NLP categories: dialogue generation, information extraction, question answering, summarization, and sentiment analysis. Following~\citet{zhao2024sapt}, three tasks are selected from each category. Table~\ref{tab:superni_tasks} lists the tasks and their evaluation metrics. Classification tasks are evaluated using accuracy; all others use Rouge-L.\looseness=-1

\begin{table*}[htbp]
\centering
\caption{Tasks in the SuperNI benchmark.}
\label{tab:superni_tasks}
\begin{tabular}{lll}
\toprule
Task & Category & Metric \\
\midrule
Task639 (MultiWOZ) & Dialogue generation & Rouge-L \\
Task1590 (Diplomacy) & Dialogue generation & Rouge-L \\
Task1729 (PersonaChat) & Dialogue generation & Rouge-L \\
Task181 (Outcome extraction) & Information extraction & Rouge-L \\
Task748 (GLUCOSE) & Information extraction & Rouge-L \\
Task1510 (Relation extraction) & Information extraction & Rouge-L \\
Task002 (Quoref) & Question answering & Rouge-L \\
Task073 (CommonsenseQA) & Question answering & Rouge-L \\
Task591 (SciQA) & Question answering & Rouge-L \\
Task511 (Reddit TIFU) & Summarization & Rouge-L \\
Task1290 (XSum) & Summarization & Rouge-L \\
Task1572 (SAMSum) & Summarization & Rouge-L \\
Task363 (SST-2) & Sentiment analysis & Accuracy \\
Task875 (Emotion) & Sentiment analysis & Accuracy \\
Task1687 (Sentiment140) & Sentiment analysis & Accuracy \\
\bottomrule
\end{tabular}
\end{table*}

\subsection{Task Orderings}

Table~\ref{tab:task_orders} lists the task sequences used in all experiments. Orders 1 and 2 correspond to the SuperNI benchmark; Orders 3 and 4 correspond to the Long Sequence benchmark. These orderings follow those used by O-LoRA~\citep{wang2023olora} and GainLoRA~\citep{gainlora}.\looseness=-1

\begin{table*}[htbp]
\centering
\caption{Task orderings used in all experiments.}
\label{tab:task_orders}
\begin{tabular}{cl}
\toprule
Order & Task sequence \\
\midrule
\multirow{3}{*}{1 (SuperNI)} 
  & Task1572 $\to$ Task363 $\to$ Task1290 $\to$ Task181 $\to$ Task002 $\to$ \\
  & Task1510 $\to$ Task639 $\to$ Task1729 $\to$ Task073 $\to$ Task1590 $\to$ \\
  & Task748 $\to$ Task511 $\to$ Task591 $\to$ Task1687 $\to$ Task875 \\
\midrule
\multirow{3}{*}{2 (SuperNI)} 
  & Task748 $\to$ Task073 $\to$ Task1590 $\to$ Task639 $\to$ Task1572 $\to$ \\
  & Task1687 $\to$ Task591 $\to$ Task363 $\to$ Task1510 $\to$ Task1729 $\to$ \\
  & Task181 $\to$ Task511 $\to$ Task002 $\to$ Task1290 $\to$ Task875 \\
\midrule
\multirow{3}{*}{3 (Long Seq.)} 
  & MNLI $\to$ CB $\to$ WiC $\to$ COPA $\to$ QQP $\to$ \\
  & BoolQ $\to$ RTE $\to$ IMDB $\to$ Yelp $\to$ Amazon $\to$ \\
  & SST-2 $\to$ DBpedia $\to$ AG News $\to$ MultiRC $\to$ Yahoo \\
\midrule
\multirow{3}{*}{4 (Long Seq.)} 
  & Yelp $\to$ Amazon $\to$ MNLI $\to$ CB $\to$ COPA $\to$ \\
  & QQP $\to$ RTE $\to$ IMDB $\to$ SST-2 $\to$ DBpedia $\to$ \\
  & AG News $\to$ Yahoo $\to$ MultiRC $\to$ BoolQ $\to$ WiC \\
\bottomrule
\end{tabular}
\end{table*}

\subsection{Implementation Details}
\label{app:implementation}

\paragraph{Model architectures.}
We evaluate on five model scales: T5-Large (770M parameters), T5-XLarge (3B), Llama-2-7B, Llama-3-8B, and Llama-2-13B. For all models, LoRA adapters are applied to the query and value projections of each attention layer. T5-Large contains 144 target modules (24 encoder self-attention + 24 decoder self-attention + 24 decoder cross-attention, each with query and value projections). T5-XL has the same architecture with larger projection dimensions ($4096 \times 1024$). Llama models contain 64 target modules for 7B/8B (32 layers $\times$ 2) and 80 for 13B (40 layers $\times$ 2).\looseness=-1

\paragraph{Adapter configuration.}
All LoRA-based baselines use rank $r{=}8$ and $\alpha{=}16$. Our method uses rank $r{=}32$ and $\alpha{=}16$. The per-task trainable parameter counts are summarized in Table~\ref{tab:param_details}.\looseness=-1

\begin{table}[htbp]
\centering
\caption{Per-task trainable parameters across models.}
\label{tab:param_details}
\begin{tabular}{lccc}
\toprule
Model & LoRA ($r{=}8$) & Ours ($r{=}32$) & Reduction \\
\midrule
T5-Large & 2,359K & 147K & 16$\times$ \\
T5-XL & 5,898K & 147K & 40$\times$ \\
Llama-2-7B & 4,194K & 66K & 64$\times$ \\
Llama-3-8B & 4,194K & 66K & 64$\times$ \\
Llama-2-13B & 6,554K & 82K & 80$\times$ \\
\bottomrule
\end{tabular}
\end{table}

\paragraph{Training.}
All methods are optimized with AdamW~\citep{loshchilov2019adamw}. For T5 models, we use a learning rate of $3 \times 10^{-4}$ and a batch size of 8. For Llama models, we use a learning rate of $5 \times 10^{-5}$ and a batch size of 4. All experiments use a constant learning rate schedule. Each task is trained for 30 epochs on T5 and 15 epochs on Llama. The orthogonal regularization coefficient is set to $\lambda = 0.05$ for SuperNI and $\lambda = 0.02$ for Long Sequence, selected based on the relative magnitude of the $\Sigma$-weighted ortho loss to the task loss on a held-out validation split of T5-Large.\looseness=-1

\paragraph{GMM router.}
The router uses $K{=}5$ Gaussian components per task, initialized via $K$-means with 30 iterations. The shared covariance regularization coefficient is $\epsilon = 0.01$. At inference, we use soft routing (Eq.~\ref{eq:blend}). The GMM fitting step takes approximately 2--5 seconds per task on CPU and does not require GPU resources.\looseness=-1

\paragraph{Evaluation.}
For classification tasks, we report exact-match accuracy. For generation tasks (SuperNI), we report Rouge-L~\citep{lin2004rouge} computed using the \texttt{rouge\_score} library with stemming enabled. During evaluation, outputs are generated using greedy decoding with a maximum generation length of 50 tokens.\looseness=-1

\paragraph{Hardware.}
All experiments are conducted on NVIDIA A100 GPUs. T5-Large and T5-XL experiments run on a single GPU. Llama experiments use a single GPU with gradient checkpointing enabled. Each experiment is repeated three times with different random seeds, and the average is reported.\looseness=-1
\section{Additional Visualizations}
\label{app:vis}

Figures~\ref{fig:tsne_appendix} and~\ref{fig:router_appendix} extend the t-SNE and router posterior analyses from the main text to three additional model scales. The patterns observed on T5-Large (Figures~\ref{fig:tsne} and~\ref{fig:router_posterior}) hold consistently: SuperNI tasks form clearly separable clusters across all models, and Long Sequence tasks show adequate margins with occasional overlap among semantically similar tasks. Router posteriors remain near-perfect on SuperNI and high-confidence on Long Sequence regardless of model family or scale.\looseness=-1

\begin{figure*}[t]
    \centering
    \begin{subfigure}{\linewidth}
        \centering
        \includegraphics[width=\linewidth]{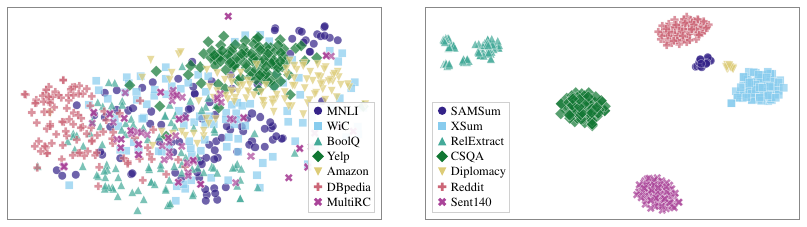}
        \caption{T5-XL}
    \end{subfigure}
    \vspace{0.3em}
    \begin{subfigure}{\linewidth}
        \centering
        \includegraphics[width=\linewidth]{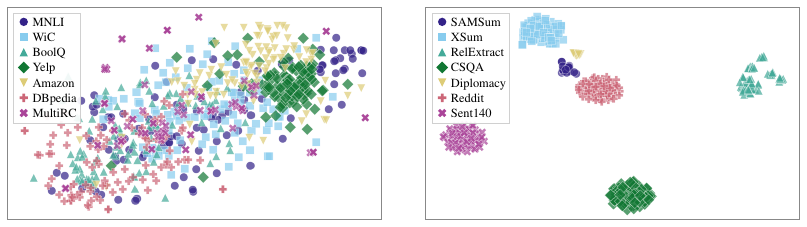}
        \caption{Llama-2-7B}
    \end{subfigure}
    \vspace{0.3em}
    \begin{subfigure}{\linewidth}
        \centering
        \includegraphics[width=\linewidth]{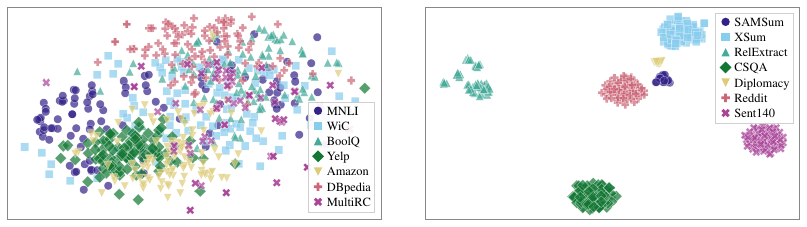}
        \caption{Llama-3-8B}
    \end{subfigure}
    \caption{t-SNE projections of pooled token embeddings from frozen 
    embedding layers. Left: Long Sequence. Right: SuperNI.}
    \label{fig:tsne_appendix}
\end{figure*}

\begin{figure*}[t]
    \centering
    \begin{subfigure}{\linewidth}
        \centering
        \includegraphics[width=\linewidth]{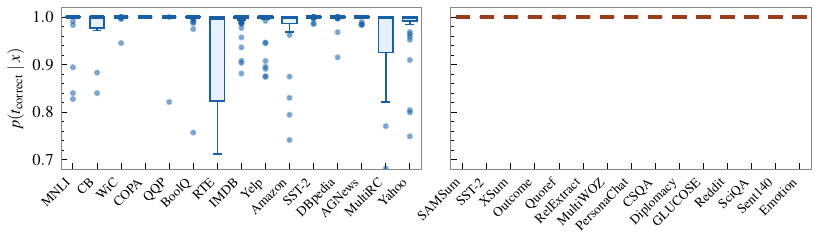}
        \caption{T5-XL}
    \end{subfigure}
    \vspace{0.3em}
    \begin{subfigure}{\linewidth}
        \centering
        \includegraphics[width=\linewidth]{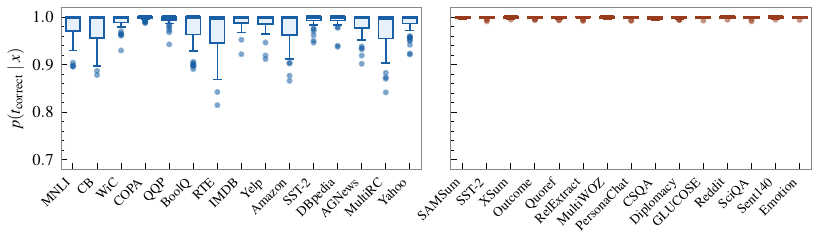}
        \caption{Llama-2-7B}
    \end{subfigure}
    \vspace{0.3em}
    \begin{subfigure}{\linewidth}
        \centering
        \includegraphics[width=\linewidth]{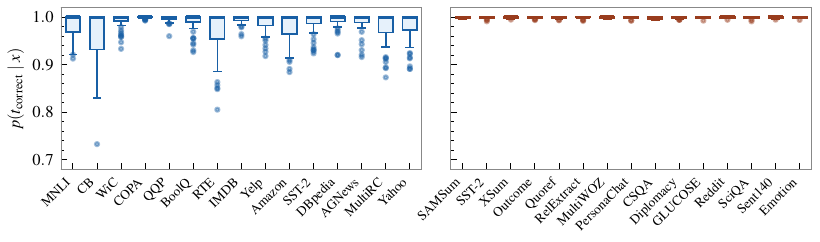}
        \caption{Llama-3-8B}
    \end{subfigure}
    \caption{Router posterior probability of the correct task across 
    test inputs. Left: Long Sequence. Right: SuperNI.}
    \label{fig:router_appendix}
\end{figure*}
\end{document}